\algnewcommand\algorithmicinput{\textbf{Input:}}
\algnewcommand\INPUT{\item[\Alg.\icinput]}
\algnewcommand\algoirhtmicoutput{\textbf{Output:}}
\algnewcommand\OUTPUT{\item[\algorithmicoutput]}
\DeclareMathOperator*{\argmin}{arg\;min}
\DeclareMathOperator*{\diag}{diag}
\newcommand\vect[1]{\mathbf #1}
\newcommand{\va}{\vect{a}}  
\newcommand{\vb}{\vect{b}}
\newcommand{\vs}{\vect{s}}  
\newcommand{\vu}{\vect{u}}  
\newcommand{\vv}{\vect{v}}  
\newcommand{\vx}{\vect{x}}
\newcommand{\vy}{\vect{y}}   
\newcommand{\vz}{\vect{z}}
\newcommand{\mD}{\mathbf{D}}
\newcommand{\mL}{\mathbf{L}}
\newcommand{\mW}{\mathbf{W}}
\newcommand{\signalsize}{N}
\newcommand{\graphsigs}{\mathbb{R}^{\mathcal{V}}}
\newcommand{\edges}{\mathcal{E}}
\newcommand{\cluster}{\mathcal{C}}
\newcommand{\nodes}{\mathcal{V}}
\newcommand{\graph}{\mathcal{G}}
\newcommand{\samplingset}{\mathcal{M}}
\newcommand{\expect}{{\rm E}}
\newcommand{\flow}{h}
\newcommand{\partition}{\mathcal{F}}
\newcommand{\boundary}{\partial \partition}
\newcommand{\compbound}{\overline{\partial \partition}}
\newcommand{\prob}{{\rm P}}
\newcommand{\csignal}{a_{\cluster}}
\newcommand\defeq{:=}
\newcommand{\edge}[2]{\{#1,#2\}}
\begin{document}

%

%

\twocolumn[

\aistatstitle{Analysis of Network Lasso for Semi-Supervised Regression}

\aistatsauthor{A. Jung and N. Vesselinova}

\aistatsaddress{Department of Computer Science, Aalto University, Finland } ]

\begin{abstract}
        We apply network Lasso to semi-supervised regression problems involving network-structured data. 
        This approach lends quite naturally to highly scalable learning algorithms in the form of message passing 
        over an empirical graph which represents the network structure of the data. By using a simple non-parametric regression model, 
        which is motivated by a clustering hypothesis, we provide an analysis of the estimation error incurred by network Lasso. 
        This analysis reveals conditions on the the network structure and the available training data which guarantee network Lasso 
        to be accurate. Remarkably, the accuracy of network Lasso is related to the existence of sufficiently large network flows over 
        the empirical graph. Thus, our analysis reveals a connection between network Lasso and maximum flow problems. 
\end{abstract}

\section{INTRODUCTION}
\label{sec_intro}

The datasets arising in many applications, ranging from image processing to cyber security carry an intrinsic network structure. 
In particular, those datasets can be represented conveniently using an empirical graph \cite{SemiSupervisedBook}. 
The nodes of this empirical graph represent individual data points, which are connected by edges according to some domain-specific notion of similarity. 

On top of the network structure, datasets carry additional information in the form of labels for the individual data points. Since the acquisition 
of label information is often expensive (requiring manual labour), we typically have access to labels of few data points only. Moreover, the 
available label information will often be noisy due to measurement (labelling) errors.

The available incomplete label information might still suffice to allow for accurate machine learning by exploiting the 
tendency of labels to conform to the underlying network structure. Indeed, many successful learning methods rely on a clustering 
hypothesis which requires well-connected data points to have similar labels \cite{BishopBook,SemiSupervisedBook}. 

Various generalisations of the least absolute shrinkage and selection operator (Lasso) from sparse vectors to network-structured data 
have been proposed recently by \cite{FusedLasso,SharpnackJMLR2012}. In particular, the ``network Lasso'' (nLasso) \cite{NetworkLasso} provides 
an optimization framework for a wide range of learning problems (regression and classification) involving network-structured datasets. 
While efficient implementations of nLasso for particular learning problems have been proposed (see \cite{pmlr-v54-yamada17a}), 
only little is known about the statistical performance of nLasso methods for general learning problems involving partially labelled 
network-structure data.

{\bf Contribution.} In this paper, we apply a generalization of the concept of a compatibility condition, which has been championed by 
\cite{BuhlGeerBook,vdGeer2007} for characterizing the performance of Lasso methods, to learning problems involving network structured data. 
Various forms of such ``network compatibility conditions'' have been studied recently  by \cite{WhenIsNLASSO,Ortelli2018,pmlr-v49-huetter16}. 
Here, we use a particular form of a network compatibility condition to characterize the performance of nLasso for semi-supervised regression 
problems using squared error loss. The nLasso provides an efficient method for non-parametric regression by leveraging the underlying 
network structure \cite{Kovac20120}. Our results give a precise characterization of the statistical performance of such methods and their 
dependence on the network topology. The closest to our work is \cite{pmlr-v49-huetter16}, which studies the statistical properties of nLasso 
applied to denoising a fully observed graph signal. In contrast, our analysis allows for nLasso having access only to the signal values of a 
small subset (the training set) of nodes, which is relevant for semi-supervised learning problems  (see \cite{SemiSupervisedBook}).  

{\bf Outline.} This paper is organized as follows: in Section \ref{sec_setup}, we formalize the problem of semi-supervised learning 
for network-structured data using a probabilistic model for the observations, which is based on exponential families. 
Based on this generic probabilistic model, we then show in Section \ref{sec_network_lasso} how to apply network Lasso to learn a predictor 
for all data points based on knowledge of noisy labels for few data points. Our main result is discussed in Section \ref{sec_conditions}, 
where we present a bound on the estimation error of nLasso. This bound depends on the network compatibility condition 
which, in turn, relates to the connectivity of sampled nodes.

{\bf Notation.} We use boldface upper and lower case letters to denote matrices and vectors, respectively. 
Given a matrix $\mathbf{W}$ we define its supremum norm as  $\| \mathbf{W} \|_{\infty} \defeq \max_{i,j} |W_{i,j}|$. 
The nullspace (or kernel) of a matrix $\mathbf{L}$ is denoted ${\rm ker} \{ \mathbf{L} \} \defeq \{ \vx : \mL \vx = \mathbf{0} \}$. 
The pseudo-inverse of a diagonal matrix $\mathbf{A}$ is denoted $\mathbf{A}^{\dagger}$ and obtained by inverting the 
non-zero diagonal entries of $\mathbf{A}$ and leaving the zero entries. The pseudo-inverse of an arbitrary matrix $\mathbf{D}$ is 
obtained via its singular value decomposition $\mathbf{D} = \mathbf{U} {\bf \Lambda} \mathbf{V}^{T}$ as $\mathbf{D} = \mathbf{U} {\bf \Lambda}^{\dagger} \mathbf{V}^{T}$. 
Given a finite set $\mathcal{V}$, we denote the complement of a subset $\mathcal{M} \subseteq \nodes$ as $\overline{\mathcal{M}}$.

\section{PROBLEM FORMULATION}
\label{sec_setup}

We consider network-structured datasets, which are represented by an empirical graph $\graph\!=\!(\nodes,\edges, \mathbf{W})$. 
The nodes $\nodes = \{1, \ldots, {\signalsize}\}$ of the empirical graph represent individual data points.
The undirected edges $\edges$ encode domain-specific notions of similarity between data points. The non-negative 
entries $W_{i,j}$ of the weight matrix $\mW \in \mathbb{R}_{+}^{\signalsize \times \signalsize}$ quantify the level of 
similarity between connected nodes. The weight $W_{i,j}$ is non-zero only if nodes $i,j \in \nodes$ are 
connected by an edge $\{i,j\} \in \edges$. 


%
%
%

In what follows, without loss of generality, we assume that the empirical graph is simple (without self loops) and connected. 
Therefore, since there are no self loops, the weight matrix is such that $W_{i,i} = 0$ for every node $i \in \nodes$.

\subsection{Laplacian and Incidence Matrix}

The structure of an empirical graph $\graph$ can be characterized using the graph Laplacian matrix 
\begin{equation}
\label{equ_def_graph_Laplacian_matrix}
\mathbf{L}= {\bf \Lambda} - \mW, 
\end{equation}
with the weight matrix $\mathbf{W}$ and the diagonal ``degree matrix'' 
\begin{equation}
\nonumber
{\bf \Lambda} = {\rm diag} \{ d_{1},\ldots,d_{\signalsize} \} \in \mathbb{R}^{\signalsize \times \signalsize}. 
\end{equation}
The diagonal elements of ${\bf \Lambda}$ are the weighted node degrees $d_{i} \defeq \sum_{\{j,i\} \in \edges} W_{i,j}$.

The non-negative eigenvalues $\lambda_{1}\!\leq\!\ldots\!\leq\!\lambda_{\signalsize}$ of the Laplacian matrix $\mathbf{L}$ 
provide insight into the connectivity structure of the graph $\graph$: for a connected graph $\graph$, the smallest 
eigenvalue is $\lambda_{1}=0$. The nullspace of $\mathbf{L}$ is a one-dimensional subspace spanned by 
the constant graph signal with value $x[i]=1$ for every node $i \in \nodes$. The spectral gap $\rho (\graph) \defeq \lambda_{2}$ 
quantifies the connectivity of the graph $\graph$. If $\rho(\graph)$ is close to zero, the graph $\graph$ can be cut into 
two disconnected subgraphs without removing too many edges \cite{SpielSGT2012}.

Another important matrix assigned to an empirical graph is the incidence matrix. To this end, we (arbitrarily) orient  
the empirical graph $\graph=(\edges,\nodes,\mW)$ by specifying for each edge $e = \{i,j\}$ one node as the head 
$e^{+}$ and the other node as the tail $e^{-}$. We define the incidence matrix 
$\mD \in \mathbb{R}^{\edges \times \nodes}$ element-wise as
\begin{equation}
\label{equ_def_incidence_matrix_elementwise}
 D_{e,i}= \begin{cases} \sqrt{W_{e}} & \mbox{ if }  i\!=\!e^{+} \\ 
 - \sqrt{W_{e}} & \mbox{ if } i\!=\!e^{-}  \\ 
0 & \mbox{ else.} \end{cases} 
\end{equation}
We highlight that the exact choice of orientation for the undirected edges in the empirical graph $\graph$ has 
no effect on our results. The use of an orientation only serves a notational convenience provided by 
the incidence matrix $\mD$. 

The incidence matrix $\mD$ is closely related to the graph Laplacian $\mL$. Indeed, both matrices have 
the same nullspace ${\rm ker } \{ \mD \} = {\rm ker } \{ \mL \}$. Moreover, the spectrum of $\mathbf{D} \mathbf{D}^{T}$ 
coincides with the spectrum of $\mL^{(\graph)}$. The columns $\vs_{j}$ of the pseudo-inverse $\mD^{\dagger}\!=\!(\vs_{1},\ldots,\vs_{|\edges|})$ 
of $\mD$ satisfy 
\begin{equation}
\label{equ_bound_cols_pseudo_D}
\| \vs_{j} \| \leq \sqrt{2 \| \mathbf{W} \|_{\infty} }/\rho(\graph).
\end{equation}
This bound can verified using the identity $\mD^{\dagger}\!=\!(\mD \mD^{T})^{\dagger} \mD^{T}$ and well-known 
vector norm inequalities (see, e.g., \cite{Horn85}).

\subsection{Linear Regression}
\label{sec_graph_signals}
In addition to the network structure, which is encoded by the empirical graph $\graph$, datasets typically 
convey additional information. This additional information comes in the form of labels $y_{i}$ associated with individual data points $i \in \nodes$. 

We model the labels $y_{i}$ of data points $i \in \nodes$ as random variables whose probability distribution is parametrized 
by a graph signal $\bar{\vx}: \nodes \rightarrow \mathbb{R}$. In particular, we use the linear model 
\begin{equation}
\label{equ_obs_model}
y_{i} = \bar{x}_{i} + \varepsilon_{i}, 
\end{equation} 
with some unknown underlying graph signal $\bar{\vx}$. 
The noise terms $\varepsilon_{i}$ in \eqref{equ_obs_model} are modelled as i.i.d.\ 
Gaussian random variables with zero-mean and variance $\sigma^{2}$, cover any modelling or measurement (labeling) errors. 
We will use the following tail bound  
\begin{align} 
\label{equ_concentration_lin_reg_sum_opt}
\hspace*{-2mm}\prob \{ | y\!-\! \expect\{y\}|\!\geq\!\eta \}& \!\leq\! \nonumber  \\
& \hspace*{-30mm} 2 \exp \hspace*{-1mm}\Bigg(\hspace*{-2mm}-\signalsize^2\eta^{2}/\bigg(2\sigma^{2}\sum_{i=1}^{\signalsize} w^{2}_{i} \bigg) \Bigg), 
\end{align}
for the weighted sum $y= (1/\signalsize) \sum_{i=1}^{\signalsize} y_{i} w_{i}$ with 
arbitrary but fixed weights $w_{i} \in \mathbb{R}$.

The graph signal $\bar{\vx}$ in \eqref{equ_obs_model} assigns a real number $\bar{x}_{i} \!\in\! \mathbb{R}$ to each node $i\!\in\!\nodes$. 
We can think of a graph signal also as a vector whose entries are indexed by the nodes $i\in \nodes$. The space of all graph signals constitutes 
an Euclidean space $\graphsigs$. It will be convenient to define, for a given subset $\samplingset \subseteq \nodes$, the norm 
\begin{equation}
\| \vx \|_{\samplingset} \defeq \sqrt{(1/|\samplingset) \sum_{i \in \samplingset} x^{2}_{i}}. 
\end{equation} 

Since acquiring labels is costly, we consider having access to the (noisy) labels $y_{i}$ (see \eqref{equ_obs_model})  
only for the nodes in a (small) training set 
\begin{equation}
\label{equ_def_training_set}
\samplingset\!=\!\{i_{1},\ldots,i_{M} \} \mbox{ with }M\!\ll\!\signalsize.
\end{equation}

\subsection{Clustering Hypothesis}

Our approach to learning the graph signal $\bar{\vx}$ in \eqref{equ_obs_model} from the labels $y_{i}$ of the nodes in the training set $\samplingset$, 
is based on the assumption that the graph signal $\bar{\vx}$ is clustered in the sense of being constant over well-connected subsets (clusters) of nodes. 
This clustering hypothesis conforms to the finding that the labels of data points arising many application domains, such as signal or image processing as well as 
social networks, are similar if the data points are well-connected in the empirical graph (see \cite{SemiSupervisedBook}). 

We measure the amount by which a graph signal $\vx$ conforms with the cluster structure of the empirical graph $\graph$ 
using the (weighted) total variation (TV)
\begin{equation}
\label{equ_def_TV_norm}
\| \vx \|_{\rm TV} \defeq \sum_{\{i,j\}\in \edges} \sqrt{W_{ij}} \left| x_{j} - x_{i} \right|.
\end{equation} 

Indeed, a graph signal $x[\cdot]$ has a small TV only if the signal values $x[i]$ are approximately constant 
over well connected subsets (clusters) of nodes. Such a ``clustering hypothesis'' (or variations thereof) motivates many 
methods for (semi-) supervised learning \cite{SemiSupervisedBook}.

If we orient the empirical graph, we can represent the TV using the incidence matrix 
$\mD$ (see \eqref{equ_def_incidence_matrix_elementwise} and \eqref{equ_def_TV_norm}) 
as 
\begin{equation}
\label{equ_ident_TV_1_norm_D}
\| \vx \|_{\rm TV} = \| \mD \vx \|_{1}.  
\end{equation} 
It will be convenient to define 
a shorthand for the TV over a subset $\mathcal{S} \subseteq \edges$ of edges as 
\begin{equation} 
\label{equ_def_TV_norm_subset}
\| \vx \|_{\mathcal{S}} \defeq \sum_{\{i,j\}\in \mathcal{S}} \sqrt{W_{ij}} \left| x_{j} - x_{i} \right|.
\end{equation}
One of our main contributions (see Section \ref{sec_conditions}) is a precise analysis of the ability of nLasso to learn 
clustered graph signals. In particular, our analysis is based on the following simple model 
for clustered (piece-wise constant) graph signals (see \cite{ChenYMK16}):
\begin{equation}
\label{equ_def_clustered_signal_model}
x_{i} \!=\! \sum_{\cluster \in \partition} \csignal \mathcal{I}_{\cluster}[i].
\end{equation} 
Here, $\csignal \in \mathbb{R}$ is the signal value of cluster $\cluster$ and we used the indicator signal 
\begin{equation} 
\nonumber
\mathcal{I}_{\cluster}[i] = \begin{cases} 1 & \mbox{ for } i \in \cluster \\
0 & \mbox{ otherwise.} \end{cases}
\end{equation}

The model \eqref{equ_def_clustered_signal_model} involves a partitioning $\partition = \{\cluster_{1}, \ldots, \cluster_{|\partition|}\}$ of 
the nodes $\nodes$ into disjoint subsets $\cluster_{l}$. We assume that the subgraph induced by any cluster $\mathcal{C}_{l}$ is connected. 

While our analysis allows for an arbitrary partitioning $\partition = \{ \cluster_{1},\ldots,\cluster_{|\mathcal{\partition}|} \}$ used 
to define the model \eqref{equ_def_clustered_signal_model}, our results are most useful if the partition conforms with the 
``intrinsic (cluster) structure'' of the empirical graph $\graph$. In particular, we focus on partitions $\partition$ such that the cluster boundaries
\begin{equation*}
\partial \partition \defeq \{ \edge{i}{j} \in \edges: i \in \cluster, j \in \cluster' (\neq \cluster) \}
\end{equation*} 
satisfy $\sum\limits_{\edge{i}{j} \in \partial \partition }\hspace*{-2mm} \sqrt{W_{i,j}} \!\ll\! \sum\limits_{\edge{i}{j} \in \overline{\partial \partition}} \hspace*{-2mm}\sqrt{W_{i,j}}$. 

It will be useful to define the spectral gap of a partitioning $\partition = \{\cluster_{1}, \ldots, \cluster_{|\partition|}\}$ as
\begin{equation}
\label{equ_partition_spectral_gap}
\rho(\partition)  \defeq \min_{\cluster_{l} \in \partition} \rho(\cluster_{l}). 
\end{equation}
Here, $\rho(\cluster_{l})$ denotes the spectral gap of the subgraph induced 
by the cluster $\cluster_{l}$.

\section{THE NETWORK LASSO}
\label{sec_network_lasso} 

It is sensible to learn a graph signal $\hat{\vx} \in \mathbb{R}^{\nodes}$ based on (few) 
labels $\{ y_{i} \}_{i \in \samplingset}$ is to maximize the probability (``evidence'') $\prob \{ \{y_{i} \}_{i \in \samplingset}; \vx \}$ 
of observing them under the probabilistic model \eqref{equ_obs_model} for the labels. 
This is equivalent to minimizing the empirical error:  
\begin{equation} 
\label{equ_def_emp_risk}
\widehat{E}(\vx)  \defeq  (1/M) \sum_{i \in \samplingset} ( y_{i} - x_{i} )^2.
\end{equation}
The criterion \eqref{equ_def_emp_risk} by itself is not sufficient for guiding the learning of a graph signal based 
on few labels $\{ y_{i} \}_{i \in \samplingset}$, since it ignores 
the signal values $x_{i}$ for $i \in \overline{\samplingset}$.

In order to learn an entire graph signal $\hat{\vx}$ from the incomplete information provided by the initial labels $\{ y_{i} \}_{i \in \samplingset}$, 
we need to impose some structure on the graph signal $\hat{\vx}$. This additional structure is provided by the empirical graph $\graph$. 
In particular, we assume that any reasonable graph signal $\hat{\vx}$ needs to conform with the 
\emph{cluster structure} of $\graph$ (see \cite{NewmannBook}).

We are led quite naturally to learning a graph signal $\hat{\vx}$ by balancing a small empirical error (risk) $\widehat{E}(\hat{\vx})$ 
(see \eqref{equ_def_emp_risk}) with a small TV $\| \hat{\vx} \|_{\rm TV}$ (see \eqref{equ_def_TV_norm}). 
Thus, we arrive at the following \emph{regularized empirical risk minimization} 
\begin{align} \label{optProb}
\hat{\vx} & \in \argmin_{\vx \in \graphsigs} \widehat{E}(\vx)  + \lambda \| \vx \|_{\rm TV}.
\end{align}

The parameter $\lambda$ in \eqref{optProb} allows to trade-off a small TV $\| \hat{\vx} \|_{\rm TV}$ against 
a small empirical error. 
Choosing a small value of $\lambda$ will result in a graph signal $\hat{\vx}$ with small empirical 
error $\widehat{E}(\hat{\vx})$ (see \eqref{equ_def_emp_risk}), while choosing a large value of 
$\lambda$ favours $\hat{\vx}$ with small TV $\| \hat{\vx} \|_{\rm TV}$ (being more clustered).

The learning problem \eqref{optProb} is a particular instance of the nLasso 
introduced in \cite{NetworkLasso} which allows for efficient implementations using 
modern convex optimization methods \cite{ProximalMethods,DistrOptStatistLearningADMM}. 
In particular, we obtain Algorithm \ref{algo_nLasso} by applying the primal-dual method 
proposed by \cite{PrecPockChambolle2011} to 
\begin{align}  
\hat{\vx}& \in \argmin_{\vx \in \graphsigs} \widehat{E}(\vx)  + \lambda \| \mathbf{D} \vx \|_{1} \nonumber \\ 
& = \argmin_{\vx \in \graphsigs} \max_{\|\vu \|_{\infty} \leq 1}  \widehat{E}(\vx)  + \lambda \mathbf{u}^{T} \mathbf{D} \vx  \nonumber
\end{align} 
which, due to \eqref{equ_ident_TV_1_norm_D}, is equivalent to \eqref{optProb}. 

\begin{algorithm}
		\smallskip
		\textbf{Input:} $\mD \in \mathbb{R}^{\edges \times \nodes}$, $\samplingset$,  $\{y_{i}\}_{i \in \samplingset}$, $\lambda$\\
	
		{\bf Init}: $k\!\defeq\!0,\bar{\vx}\!=\!\hat{\vx}^{(-1)}\!=\!\hat{\vx}^{(0)}\!=\!\hat{\vy}^{(0)}\!\defeq\! \mathbf{0}$, \\ 
		$\nu\!\defeq\!1/(\lambda|\samplingset|), \gamma_{i}\!\defeq\! \sum_{j \in \nodes} \sqrt{W_{i,j}}$ \\ 
		$ {\bm \Gamma}\!\defeq\!\diag \{1/\gamma_{i} \}\!\in\!\mathbb{R}^{\nodes \times \nodes}$ \\ 
	        ${\bm \Lambda}\!\defeq\!\diag \{ 1/(2 \sqrt{W_{i,j}}) \}\!\in\!\mathbb{R}^{\edges \times \edges}$, \\

		\textbf{repeat:}\\
		1: $\vx  \defeq 2 \hat{\vx}^{(k)} - \hat{\vx}^{(k-1)}$\\[1mm]
		2: $\hat{\vz}  \defeq \hat{\vy}^{(k)} + {\bf \Lambda}  \mD  \vx$ \\[1mm]
		3: $\hat{y}_{e}^{(k\!+\!1)}  \defeq \hat{z}_{e} / \max\{1, |\hat{z}_{e}| \}$  for all  $e\!\in\!\edges$ \\[1mm]
                 4: $\hat{\vx}^{(k+1)}  \defeq \hat{\vx}^{(k)} - {\bm \Gamma} \mD^{T} \hat{\vy}^{(k\!+\!1)}$ \\[1mm]
                 5: $\hat{x}_{i}^{(k+1)} \defeq \frac{2\nu y_{i}\!+\!\gamma_{i} \hat{x}_{i}^{(k\!+\!1)}}{2\nu+\gamma_{i}}$  for all  $i \in \samplingset$\\[1mm]
                 6: $k \defeq k+1$ \\[1mm]
                 7: $\bar{\vx}^{(k)} \defeq (1\!-\!1/k) \bar{\vx}^{(k\!-\!1)}\!+\!(1/k) \hat{\vx}^{(k)} $\\[1mm]
                 \textbf{until} stopping criterion is satisfied\\[0mm]
            \textbf{Output}:  labels $\hat{x}_{i} \defeq \bar{x}^{(k)}_{i}$ \mbox{ for all }$i \in \nodes$			
	\caption{}
	\label{algo_nLasso}
\end{algorithm}

\section{STATISTICAL PROPERTIES OF NETWORK LASSO}
\label{sec_conditions}

The accuracy of nLasso methods depends on how close the solutions $\hat{\mathbf{x}}$ of \eqref{optProb} are to the true 
underlying clustered graph signal $\bar{\mathbf{x}} \in \graphsigs$ (see \eqref{equ_obs_model} and \eqref{equ_def_clustered_signal_model}). 

In what follows, we derive a condition on the cluster structure $\partition$ and training set $\samplingset$, 
which guarantee any solution $\hat{\mathbf{x}}$ of \eqref{optProb} to be close to the underlying graph signal 
$\bar{\mathbf{x}}$. This condition, which we refer to as network compatibility condition (NCC) extends the concept of 
compatibility conditions used for analyzing Lasso methods for learning sparse vectors \cite{GeerBuhlConditions}, 
to network-structured data. We then show that this network compatibility condition is related to the existence 
of a sufficiently large network flow. The existence of such network flows indirectly characterizes the connectivity 
of sampled nodes $\samplingset$ in different clusters via the cluster boundaries $\boundary$.

\subsection{Flows over the Empirical Graph}

The main conceptual contribution of this paper is the insight that the accuracy of nLasso methods, aiming at solving \eqref{optProb}, depends 
on the topology of the underlying empirical graph via the existence of certain \emph{flows with demands} \cite{KleinbergTardos2006}. 

A flow over the empirical graph $\graph$ is a mapping $\flow: \nodes \times \nodes \rightarrow \mathbb{R}$ 
which assigns each directed edge $(i,j)$ the value $\flow(i,j)$, which can be interpreted as the amount of some 
quantity flowing through the edge $(i,j)$ (see \cite{KleinbergTardos2006}).

A flow with demands has to satisfy the conservation law 
\begin{equation} 
\label{equ_conservation_flow}
\hspace*{-6mm}\sum_{j \in \mathcal{N}(i)}\hspace*{-2mm} \flow(i,j)   = f[i] \mbox{, for any }  i \!\in\! \nodes
\end{equation}
with a prescribed demand $f[i]$ for each node $i \in \nodes$. 
Moreover, we require flows to satisfy the capacity constraints  
\begin{equation} 
\label{equ_cap_constraint}
|\flow (i,j)|\!\leq\!\sqrt{W_{i,j}} \mbox{ for any  } (i,j) \!\in\! \overline{\boundary}. 
\end{equation} 
Note that the capacity constraint \eqref{equ_cap_constraint} applies only to intra-cluster edges 
and does not involve the boundary edges $\boundary$. The flow values $\flow(i,j)$ at the boundary edges $(i,j) \in \boundary$ 
take a special role in the following definition of the notion of resolving training sets. 
\begin{definition}
	\label{def_sampling_set_resolves}
	Consider an empirical graph $\graph = (\nodes, \edges,\mathbf{W})$ and a partition $\partition=\{\cluster_{1},\ldots,\cluster_{|\partition|}\}$. 
	A (training) set  $\samplingset=\{i_{1},\ldots,i_{M}\} \subseteq \nodes$ resolves $\partition$ with constants $K,L>0$ if, for any $b_{i,j} \in \{-1,1\}^{\boundary}$, 
	there is a flow $\flow[\cdot]$ on $\graph$ (cf.\ \eqref{equ_conservation_flow}, \eqref{equ_cap_constraint}) with $\flow(i,j)\!=\!b_{i,j} L\sqrt{W_{i,j}}$ 
	for $\{i,j\} \in \boundary$ and demands (cf.\ \eqref{equ_conservation_flow}) $|f[i]| \!\leq\! K/M$ for $i\!\in\! \samplingset$ and $f[i]\!=\!0$ for $i\!\in\!\overline{\samplingset}$. 
\end{definition}

This definition requires nodes of a resolving training set to be sufficiently well connected with each boundary edge $\{i,j\}\!\in\!\boundary$. 
In particular, we could think of injecting (absorbing) certain amounts of flow into (from) the empirical graph at the sampled nodes. 
At each sampled node $i\in \samplingset$, we can inject (absorb) a flow of value at most $K/M$. 
The injected (absorbed) flow has to be routed from the sampled nodes $\samplingset$ via the intra-cluster edges $\overline{\boundary}$ 
to each boundary edge $\{i,j\} \in \boundary$ such that it carries a flow value $L \cdot \sqrt{W_{i,j}}$.

Note that Definition \ref{def_sampling_set_resolves} is quantitive as it involves the numerical constants $K$ and $L$. Our main result stated below is an 
upper bound on the estimation error of nLasso methods, which depends on the value of these constants. It will turn out that resolving sampling 
sets with a small values of $K$ and large values of $L$ are beneficial for the ability of nLasso to recover the entire graph signal from noisy 
samples $\{y_{i} \}_{i \in \samplingset}$ observed on the training set $\samplingset$.

\subsection{Linear Regression with nLasso}

For the analysis of the nLasso problem \eqref{optProb}, we will make use of the network compatibility condition (NCC) defined as follows. 

\begin{definition} 
	\label{def_NNSP}
	Consider an empirical graph $\graph = (\nodes, \edges, \mathbf{W})$ with a particular partition $\partition$ of its nodes $\nodes$. 
	A sampling set $\samplingset \subseteq \nodes$ is said to satisfy NCC with constants $K,L>1$, if 
	\vspace*{-1mm}
	\begin{equation}
	\label{equ_ineq_multcompcondition_condition}
	L  \| \vz \|_{\partial \partition}  \leq K  \| \vz \|_{\samplingset}+  \| \mathbf{z} \|_{\compbound}
	\vspace*{-1mm}
	\end{equation} 
	for any graph signal $\mathbf{z} \!\in\!\mathbb{R}^{\nodes}$. 
\end{definition} 

The NCC guarantees nLasso \eqref{optProb} to accurately recover graph signals of the 
form \eqref{equ_def_clustered_signal_model}. Note that the NCC involves the partition $\partition$ underlying 
the signal model \eqref{equ_def_clustered_signal_model}. However, the partition is not required for the implementation of nLasso \eqref{optProb}. 

It turns out that the resolving sets (see Definition \ref{def_sampling_set_resolves}) satisfy the NCC.

\begin{lemma}
\label{lem_resolving_implies_NCC}
	Consider an empirical graph $\graph$ whose nodes are partitioned as $\partition =\{\cluster_{1},\ldots,\cluster_{|\partition|}\}$. 
	If a set $\samplingset$ resolves $\partition$, it 
	satisfies NCC with the same parameters $K,L$. 
\end{lemma}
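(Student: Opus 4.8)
The plan is to convert an arbitrary graph signal $\vz$ into a sign pattern on the cluster boundary, feed that pattern to the resolving property of Definition~\ref{def_sampling_set_resolves} to obtain a flow, and then read off the NCC inequality \eqref{equ_ineq_multcompcondition_condition} from a discrete Green's (summation-by-parts) identity relating the flow, its node demands, and $\vz$.

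Fix $\vz\in\mathbb{R}^{\nodes}$. First I would orient each edge once and for all and, for every boundary edge $e=\{i,j\}\in\boundary$ with its chosen representative $(i,j)$, set $b_{i,j}\defeq\mathrm{sign}(z_i-z_j)$, taking $b_{i,j}\defeq 1$ in the degenerate case $z_i=z_j$ (where the value is irrelevant). Since $\samplingset$ resolves $\partition$ with constants $K,L$, Definition~\ref{def_sampling_set_resolves} provides a flow $\flow$ on $\graph$, which we take antisymmetric ($\flow(i,j)=-\flow(j,i)$, as is standard on undirected graphs), with $\flow(i,j)=b_{i,j}L\sqrt{W_{i,j}}$ on every boundary edge, $|\flow(i,j)|\leq\sqrt{W_{i,j}}$ on every interior edge in $\compbound$, and demands satisfying $|f[i]|\leq K/|\samplingset|$ for $i\in\samplingset$ and $f[i]=0$ for $i\in\overline{\samplingset}$.

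The next step is the identity
\[
\sum_{i\in\nodes} z_i\, f[i] \;=\; \sum_{\{i,j\}\in\edges}(z_i-z_j)\,\flow(i,j),
\]
obtained by substituting the conservation law \eqref{equ_conservation_flow}, expanding into a sum over ordered pairs $(i,j)$, and pairing $(i,j)$ with $(j,i)$ via antisymmetry so each edge contributes exactly once. Splitting the right-hand side between $\boundary$ and $\compbound$: on a boundary edge the choice of $b_{i,j}$ makes $(z_i-z_j)\flow(i,j)=L\sqrt{W_{i,j}}\,|z_i-z_j|$, so the boundary part equals $L\|\vz\|_{\boundary}$; on an interior edge the capacity constraint \eqref{equ_cap_constraint} gives $|(z_i-z_j)\flow(i,j)|\leq\sqrt{W_{i,j}}\,|z_i-z_j|$, so the interior part is at most $\|\vz\|_{\compbound}$ in magnitude. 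On the left-hand side, $f$ is supported on $\samplingset$ with $|f[i]|\leq K/|\samplingset|$, so $\sum_{i\in\samplingset}f[i]^2\leq K^2/|\samplingset|$, and Cauchy--Schwarz together with $\|\vz\|_{\samplingset}=\big(|\samplingset|^{-1}\sum_{i\in\samplingset}z_i^2\big)^{1/2}$ gives $\big|\sum_{i\in\nodes} z_i f[i]\big|\leq K\|\vz\|_{\samplingset}$. Rearranging the identity as $L\|\vz\|_{\boundary}=\sum_{i} z_i f[i]-\sum_{\{i,j\}\in\compbound}(z_i-z_j)\flow(i,j)$ and inserting the two bounds yields $L\|\vz\|_{\boundary}\leq K\|\vz\|_{\samplingset}+\|\vz\|_{\compbound}$, which is \eqref{equ_ineq_multcompcondition_condition} with the same constants $K,L$.

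I do not anticipate a genuine difficulty; the only real care is the bookkeeping between the undirected graph and the antisymmetric flow. One must fix the orientation at the outset, check that prescribing $\flow(i,j)=b_{i,j}L\sqrt{W_{i,j}}$ is consistent with it --- here the freedom of $b$ to range over all of $\{-1,1\}^{\boundary}$ is precisely what allows matching $\mathrm{sign}(z_i-z_j)$ --- and observe that the interior capacity bound \eqref{equ_cap_constraint}, although written for one direction, also controls the reversed pair by antisymmetry. The boundary edges with $z_i=z_j$ contribute $0$ and are harmless, and $K$ and $L$ pass through the argument unchanged.
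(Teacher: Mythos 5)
Your proof is correct. It differs from the paper's in that the paper gives essentially no internal argument: it simply invokes \cite[Lemma 6]{WhenIsNLASSO} and then notes that the Cauchy--Schwarz inequality $\sum_{i \in \samplingset}|z_{i}| \leq \sqrt{M\sum_{i\in\samplingset}z_{i}^{2}} = M\|\vz\|_{\samplingset}$ converts the $\ell_{1}$-type bound of that external lemma into the $\|\cdot\|_{\samplingset}$-norm appearing in \eqref{equ_ineq_multcompcondition_condition}. What you supply is the content that the citation hides: choosing $b_{i,j}=\mathrm{sign}(z_{i}-z_{j})$, invoking the resolving property to get a flow, and applying the discrete summation-by-parts identity $\sum_{i}z_{i}f[i]=\sum_{\{i,j\}\in\edges}(z_{i}-z_{j})\flow(i,j)$ so that the boundary edges produce exactly $L\|\vz\|_{\boundary}$, the capacity constraint \eqref{equ_cap_constraint} bounds the interior contribution by $\|\vz\|_{\compbound}$, and the demand bound plus Cauchy--Schwarz controls the left-hand side by $K\|\vz\|_{\samplingset}$. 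This is almost certainly the same flow--potential duality underlying the cited lemma (your final Cauchy--Schwarz step is literally the one the paper singles out), so the mathematics is not genuinely different, but your version has the virtue of being self-contained; your care about fixing an orientation, taking the flow antisymmetric so the pairing of $(i,j)$ with $(j,i)$ works, and handling ties $z_{i}=z_{j}$ is exactly the bookkeeping the paper leaves implicit.
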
 
\begin{proof}
	The statement follows easily from \cite[Lemma 6]{WhenIsNLASSO} and the 
	Cauchy-Schwarz inequality, which implies $\sum\limits_{i \in \samplingset }|z_{i}| \leq \sqrt{M \sum\limits_{i \in \samplingset} z_{i}^{2}}$. 
\end{proof}

Our main result is that the NCC, with suitable constants $L$ and $K$, implies that solutions of the 
nLasso problem \eqref{optProb} are close to the true underlying clustered graph signal $\bar{\mathbf{x}}$ (cf.\ \eqref{equ_def_clustered_signal_model}).

\begin{theorem} 
	\label{lem_NSP1}
	Consider an empirical graph $\graph$, whose nodes have labels $y_{i}$ distributed according to \eqref{equ_obs_model} with 
	underlying clustered graph signal $\bar{\vx}$ \eqref{equ_def_clustered_signal_model}. We estimate the 
	underlying graph signal $\bar{\mathbf{x}}$ using $\hat{\mathbf{x}}$ obtained from solving the nLasso problem \eqref{optProb}. 
	If the training set $\samplingset$ satisfies the NCC with parameters $L > 4$, and $K \in (1,L-2)$ and 
	condition number $\kappa \defeq \frac{K\!+\!3}{L\!-\!3}$ (see Definition \ref{def_NNSP}), 
	\begin{align}
	\label{lower_boung_prob_main_results}
	\prob \{  \| \hat{\mathbf{x}}-\bar{\mathbf{x}} \|_{\rm TV}  \!\geq\! \eta\} & \leq  2 |\partition| \exp \bigg( - \frac{|\cluster_{l}|\eta^{2}}{6300\kappa^{2} \sigma^{2}} \bigg) \nonumber \\
	& \hspace*{-20mm}\!+\!2M\exp \bigg(\!-\! \frac{M^2 \rho^{2}_{\partition}  \eta^{2}}{900 \kappa^2 \sigma^2  \| \mathbf{D} \|^{2}_{\infty} }\bigg) .
	\end{align}
\end{theorem}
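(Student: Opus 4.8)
The plan is to prove Theorem \ref{lem_NSP1} by the standard Lasso-type argument adapted to network TV: combine the basic inequality coming from optimality of $\hat{\vx}$ in \eqref{optProb} with the NCC \eqref{equ_ineq_multcompcondition_condition} to control the TV error, and separately control the two stochastic terms (the noise restricted to $\samplingset$ and the effect of the noise routed through the graph via the dual certificate / flow) using the Gaussian tail bound \eqref{equ_concentration_lin_reg_sum_opt}. The appearance of $2|\partition|$ copies of one exponential and $2M$ copies of the other in \eqref{lower_boung_prob_main_results} strongly suggests a union bound over the $|\partition|$ clusters (for one stochastic term, where a per-cluster average of the noise appears, giving the $|\cluster_l|$ in the exponent) and over the $M$ sampled nodes (for the empirical-error term, giving the $M^2$ in the exponent).

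First I would write $\hat{\vz} \defeq \hat{\vx} - \bar{\vx}$ and expand the basic inequality $\widehat{E}(\hat{\vx}) + \lambda \|\hat{\vx}\|_{\rm TV} \leq \widehat{E}(\bar{\vx}) + \lambda \|\bar{\vx}\|_{\rm TV}$. Using $y_i = \bar{x}_i + \varepsilon_i$ and expanding the squared-error loss on $\samplingset$, the cross term produces $(2/M)\sum_{i \in \samplingset} \varepsilon_i \hat{z}_i$, which I bound by $\|\hat{\vz}\|_{\samplingset}$ times an empirical-noise quantity $(1/M)\|\boldsymbol{\varepsilon}_{\samplingset}\|$-type factor; the TV term splits via \eqref{equ_def_TV_norm_subset} as $\|\hat{\vz}\|_{\rm TV} = \|\hat{\vz}\|_{\boundary} + \|\hat{\vz}\|_{\compbound}$, and using $\|\bar{\vx}\|_{\rm TV} = \|\bar{\vx}\|_{\boundary}$ (since $\bar{\vx}$ is constant on clusters, so its only variation is across cluster boundaries) together with the reverse triangle inequality $\|\hat{\vx}\|_{\boundary} \geq \|\bar{\vx}\|_{\boundary} - \|\hat{\vz}\|_{\boundary}$, one arrives at a cone-type constraint of the form $\|\hat{\vz}\|_{\compbound} \leq c_1 \|\hat{\vz}\|_{\boundary} + c_2 \|\hat{\vz}\|_{\samplingset}$ for explicit constants depending on $\lambda$ and the noise level. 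Then I would feed this into the NCC \eqref{equ_ineq_multcompcondition_condition}: the NCC says $L\|\hat{\vz}\|_{\boundary} \leq K\|\hat{\vz}\|_{\samplingset} + \|\hat{\vz}\|_{\compbound}$, and substituting the cone constraint lets me solve for $\|\hat{\vz}\|_{\boundary}$ (this is where $L>4$, $K \in (1,L-2)$, and the condition number $\kappa = (K+3)/(L-3)$ enter — the algebra must leave $L - (\text{something involving }K)$ positive, pinning down these ranges). This yields $\|\hat{\vz}\|_{\rm TV} \lesssim \kappa \cdot (\text{stochastic terms})$.

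Next I would handle the two stochastic terms. The term $\|\hat{\vz}\|_{\samplingset}$ I relate back to $\|\hat{\vz}\|_{\rm TV}$: on the cluster-complement of $\samplingset$ this is a Poincaré-type inequality for each connected cluster subgraph, controlled by the per-cluster spectral gap $\rho(\cluster_l)$ and hence by $\rho_{\partition}$ via \eqref{equ_partition_spectral_gap}, with the $\|\mD\|_\infty$ and $M$ factors emerging from converting between the $\samplingset$-norm and the $\rm TV$-norm and absorbing a factor $1/M$; the randomness here is the average of the $M$ sampled noise variables, so \eqref{equ_concentration_lin_reg_sum_opt} with weights supported on $\samplingset$ gives a bound with $M^2$ in the numerator of the exponent and $\sigma^2\|\mD\|_\infty^2$ in the denominator — matching the second term of \eqref{lower_boung_prob_main_results} after a union bound over which of the $M$ sampled nodes is "active." The first stochastic term comes from the flow/dual-certificate representation of the TV subgradient: using the flow $\flow$ from the resolving property (available by Lemma \ref{lem_resolving_implies_NCC}, though only the NCC is assumed, so I'd instead argue directly at the level of the NCC) one rewrites the noise contribution as a sum of per-cluster averages of $\varepsilon_i$ over $\cluster_l$, each concentrated by \eqref{equ_concentration_lin_reg_sum_opt} with $\signalsize \to |\cluster_l|$, giving $|\cluster_l|$ in the exponent and a union bound over the $|\partition|$ clusters — matching the first term. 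Finally I would collect the absolute constants ($6300$, $900$) by tracking the numerical factors through the cone-constraint algebra and the norm conversions, and choose $\lambda$ appropriately (a specific multiple of $\sigma \|\mD\|_\infty / M$ or similar) to balance the terms.

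\textbf{The main obstacle} I expect is the passage from the NCC bound on $\|\hat{\vz}\|_{\rm TV}$ to a bound controlled purely by the two named stochastic quantities with the claimed clean dependence on $\rho_{\partition}$, $|\cluster_l|$, and $\kappa$ — specifically, correctly handling the term $\|\hat{\vz}\|_{\samplingset}$, which appears on \emph{both} sides of the argument (it bounds the loss cross-term but also must be bounded by $\|\hat{\vz}\|_{\rm TV}$ via the per-cluster Poincaré inequality), so that one must self-consistently close the loop without the constant blowing up. Getting the spectral-gap Poincaré step right for clusters that may contain very few (or no) sampled nodes, and ensuring the resulting inequality can still be inverted for $L>4$, is the delicate quantitative heart of the proof; the two concentration steps and the union bounds are then comparatively routine applications of \eqref{equ_concentration_lin_reg_sum_opt}.
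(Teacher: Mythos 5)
Your deterministic skeleton matches the paper's: the basic optimality inequality, the observation that $\|\bar{\vx}\|_{\rm TV}=\|\bar{\vx}\|_{\boundary}$ for the clustered model \eqref{equ_def_clustered_signal_model}, the reverse triangle inequality giving the cone constraint $\|\tilde{\vx}\|_{\compbound}\leq 3\|\tilde{\vx}\|_{\boundary}+2\kappa\|\tilde{\vx}\|_{\samplingset}$, and the substitution into the NCC to obtain $\|\tilde{\vx}\|_{\boundary}\leq\frac{K+2\kappa}{L-3}\|\tilde{\vx}\|_{\samplingset}\leq 3\kappa\|\tilde{\vx}\|_{\samplingset}$. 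You also correctly identify that the first exponential in \eqref{lower_boung_prob_main_results} arises from per-cluster averages of the noise together with a union bound over the $|\partition|$ clusters.

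The gap is in how you close the loop on $\|\tilde{\vx}\|_{\samplingset}$ and in where the spectral gap enters. You propose to bound $\|\tilde{\vx}\|_{\samplingset}$ by $\|\tilde{\vx}\|_{\rm TV}$ via a per-cluster Poincar\'e inequality; this cannot work as stated, since a signal that is constant on each cluster (e.g.\ $\tilde{\vx}=\mathbf{1}$) has arbitrary $\samplingset$-norm but its TV carries no information about the cluster means, so no inequality of the form $\|\tilde{\vx}\|_{\samplingset}\lesssim\|\tilde{\vx}\|_{\rm TV}$ holds. It is also unnecessary: the paper controls $\|\tilde{\vx}\|_{\samplingset}$ deterministically from the quadratic part of the loss, $\|\tilde{\vx}\|^{2}_{\samplingset}\leq\lambda\|\tilde{\vx}\|_{\boundary}+\kappa\lambda\|\tilde{\vx}\|_{\samplingset}$, which together with the NCC bound gives $\|\tilde{\vx}\|_{\samplingset}\leq 4\lambda\kappa$ and hence $\|\tilde{\vx}\|_{\rm TV}\leq 56\lambda\kappa^{2}$; the probability bound then follows by setting $\eta=56\lambda\kappa^{2}$ rather than by balancing terms through a specific choice of $\lambda$. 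The idea your plan is missing is the treatment of the noise cross term: the paper applies Lemma \ref{lem_bound_noise_term} and Corollary \ref{cor_bound_noise_term_spectralgap} to $\frac{1}{M}\sum_{i\in\samplingset}\varepsilon_{i}\tilde{x}_{i}$, decomposing it cluster by cluster, via the projection $\mathbf{I}-\mathbf{P}=\mD^{\dagger}\mD$ onto the complement of the constants, into (i) the per-cluster noise average times $\sum_{j\in\samplingset}|\tilde{x}_{j}|\leq M\|\tilde{\vx}\|_{\samplingset}$ and (ii) $\|(\mD_{\cluster_{l}}^{\dagger})^{T}\bm{\varepsilon}_{\cluster_{l}}\|_{\infty}\,\|\tilde{\vx}\|_{\rm TV}$. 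The spectral gap enters through \eqref{equ_bound_cols_pseudo_D}: the columns of $\mD_{\cluster_{l}}^{\dagger}$ have norm at most $\sqrt{2\|\mW\|_{\infty}}/\rho(\cluster_{l})$, so the entries of $(\mD_{\cluster_{l}}^{\dagger})^{T}\bm{\varepsilon}_{\cluster_{l}}$ are zero-mean Gaussians with variance $O(\sigma^{2}/\rho^{2}(\cluster_{l}))$, and the second exponential is the tail bound for their maximum — not a concentration statement about the average of the sampled noise, and the union bound is over the entries of that transformed noise vector, not over ``which sampled node is active.'' Without this decomposition, your two stochastic events are not the right ones, and the self-consistency issue you correctly flag as the main obstacle is not resolved by your plan.
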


The bound \eqref{lower_boung_prob_main_results} indicates that, for a prescribed accuracy level $\eta$, 
the training set size $M$ has to scale according to $\kappa \sigma / \rho_{\partition}$. 
Thus, the sample size required by Algorithm \ref{algo_nLasso} scales linearly with the condition number $\kappa =  \frac{K\!+\!3}{L\!-\!3}$ (see Definition \ref{def_NNSP}) and 
inversely with the spectral gap $\rho_{\partition}$ of the partitioning $\partition$. Thus, nLasso methods \eqref{optProb} (such as Algorithm \ref{algo_nLasso}) require less 
training data if the condition number $\kappa$ is small and the spectral gap $\rho_{\partition}$ is large. This is reasonable, since according to 
Lemma \ref{lem_resolving_implies_NCC}, a small condition number (NCC parameter $L$ is large compared to $K$) requires the edges within 
clusters to have higher weights on overage than the weights of the boundary edges. Moreover, it is reasonable that nLasso tends to be more accurate for a larger spectral gap 
$\rho_{\partition}$, which requires the nodes within each cluster $\cluster_{l}$ to be well connected. Indeed, an empirical graph $\graph$ consisting of well-connected clusters $\cluster_{l}$ 
favours clustered graph signals, such as the true underlying graph signal $\bar{\vx}$ in \eqref{equ_obs_model}, to be solutions of the nLasso \eqref{optProb}.


\subsection{Proof of Theorem \ref{lem_NSP1}} 

	By following the reasoning pattern in \cite{SelfConcLogReg} and \cite{BuhlGeerBook}, we organize the proof in two parts. 
	The first part is to verify that, with high probability, the estimation error $\tilde{\vx}\!\defeq\!\bar{\vx}\!-\!\hat{\vx}$ incurred 
	by nLasso \eqref{optProb} is approximately clustered according to \eqref{equ_def_clustered_signal_model}. 
	The second part is to upper bound the nLasso error $\tilde{\vx}$ using the NCC \eqref{equ_ineq_multcompcondition_condition}.
	
	First, any solution $\hat{\vx}$ of \eqref{optProb} satisfies 
	\vspace*{0mm}
	\begin{align}
	\label{equ_basic_def_lasso_proof}
	(1/M)\sum_{i \in \samplingset} & \hspace*{-1mm}\big[\!- y_{i}(\hat{x}_{i}\!-\!\bar{x}_{i})\!+\!(1/2)(\hat{x}_{i}^{2}\!-\!\bar{x}_{i}^{2})\big]  \nonumber \\[-1mm]
	& \leq\! (\lambda/2) ( \| \bar{\vx} \|_{\rm TV}-\| \hat{\vx} \|_{\rm TV}).  \\[-4mm]
	\nonumber
	\end{align} 
	From \eqref{equ_basic_def_lasso_proof}, we get (see \eqref{equ_obs_model})
	\begin{equation}
	\label{equ_basic_def_lasso_proof3}
	(1/M) \sum_{i \in \samplingset} \hspace*{-1mm} \varepsilon_{i} \tilde{x}_{i} \!+\!\lambda \| \hat{\vx} \|_{\rm TV} \leq (\lambda/2) \| \bar{\vx} \|_{\rm TV}.
	\end{equation} 
	
	Assume the noise $\varepsilon_{i}$ is small such that 
	\begin{align}
	\label{equ_small_noise_condition}
	\big| \frac{1}{M}\sum_{i \in \samplingset} \hspace*{-1mm} \varepsilon_{i} \tilde{x}_{i} \big| &\leq  \lambda \kappa  \| \tilde{\vx} \|_{\samplingset}\!+\!(\lambda/2) \| \tilde{\vx} \|_{\rm TV}
	\end{align} 
	holds for every graph signal $\tilde{\vx}\in \mathbb{R}^{\nodes}$.

	Combining \eqref{equ_small_noise_condition} with \eqref{equ_basic_def_lasso_proof3}, 
	\begin{align}
	\| \hat{\vx} \|_{\rm TV}&\!\leq\!\frac{1}{2} (\| \tilde{\vx} \|_{\rm TV}\!+\!\| \bar{\vx} \|_{\rm TV})\!+\!\kappa \| \tilde{\vx} \|_{\samplingset}  \nonumber  
	\end{align} 
	and, in turn, via the decomposition property $ \| \vx \|_{\rm TV}\!=\!\| \vx  \|_{\boundary}\!+\!\| \vx \|_{\overline{\boundary}}$ (see \eqref{equ_def_TV_norm_subset}), 
	\begin{align}
	\label{equ_proof_nLasso_is_sparse_reg}
	\| \hat{\vx} \|_{\overline{\boundary}} & \!\leq\! \nonumber \\
	& \hspace*{-10mm} (1/2) (\| \tilde{\vx} \|_{\rm TV} \!+\! \| \bar{\vx} \|_{\rm TV})\!-\!\| \hat{\vx} \|_{\boundary}\!+\! \kappa \| \tilde{x} \|_{\samplingset}  \nonumber \\ 
	& \hspace*{-10mm} \stackrel{(a)}{\leq} (1/2) (\| \tilde{\vx} \|_{\rm TV} \!+\! \| \bar{\vx} \|_{\boundary})\!-\!\| \hat{\vx} \|_{\boundary}\!+\! \kappa \| \tilde{x} \|_{\samplingset} \nonumber \\
	& \hspace*{-10mm} \stackrel{(b)}{\leq} (1/2) \| \tilde{\vx} \|_{\rm TV}\!+\! \| \bar{\vx}\!-\!\hat{\vx}\|_{\boundary}\!+\! \kappa \| \tilde{\vx} \|_{\samplingset}, 
	\end{align} 
	where step $(a)$ is valid since we assume the true underlying graph signal $\bar{\vx}$ to be clustered according to \eqref{equ_def_clustered_signal_model}. 
	In step $(b)$ we used the (reverse) triangle inequality for the semi-norm $\| \cdot \|_{\boundary}$. 
	
	Inserting $\| \hat{\vx} \|_{\overline{\boundary}}\!=\!\| \tilde{\vx} \|_{\overline{\boundary}}$ into \eqref{equ_proof_nLasso_is_sparse_reg} yields
	\begin{align}
	\label{equ_proof_nLasso_is_sparse_reg1}
	\| \tilde{\vx} \|_{\overline{\boundary}} \!\leq\!  3 \|\tilde{\vx}\|_{\boundary}\!+\! 2\kappa\| \tilde{\vx} \|_{\samplingset}.
	\end{align}  
	Thus, for sufficiently small observation noise $\varepsilon_{i}$ (such that \eqref{equ_small_noise_condition} is valid), the nLasso error 
	$\tilde{\vx}\!=\!\hat{\vx}\!-\!\bar{\vx}$ is approximately clustered according to \eqref{equ_def_clustered_signal_model}. 
	
	The next step is to control the nLasso error $\tilde{\vx} = \hat{\vx}- \bar{\vx}$ (see \eqref{optProb}). 
	According to \eqref{equ_basic_def_lasso_proof},
	\begin{align}
	\label{equ_basic_def_lasso_proof11}
	\hspace*{-4mm}(1/M)\!\sum_{i \in \samplingset} \hspace*{-1mm}\big[\!-\!\varepsilon_{i} \tilde{x}_{i}\!+\!\tilde{x}_{i}^2\big] \!+\!\lambda \| \hat{\vx} \|_{\rm TV}\!\leq\!\lambda \| \bar{\vx} \|_{\rm TV}.
	\end{align} 
	Using the (reverse) triangle inequality for the TV semi-norm $\| \cdot \|_{\boundary}$ (see \eqref{equ_def_TV_norm_subset}), \eqref{equ_basic_def_lasso_proof11} becomes 
	\begin{align}
	\label{equ_basic_def_lasso_proof111}
	\hspace*{-2.5mm}(1/M)\!\sum_{i \in \samplingset} \hspace*{-1mm}\big[\!-\!\varepsilon_{i} \tilde{x}_{i}\!+\!\tilde{x}_{i}^2\big] \!\leq\!\lambda \| \tilde{\vx} \|_{\boundary}.
	\end{align} 
	Inserting \eqref{equ_small_noise_condition} into \eqref{equ_basic_def_lasso_proof111}, 
	\begin{align}
	\label{equ_basic_def_lasso_proof1112}
     \| \tilde{\vx} \|^{2}_{\samplingset} \!\leq\!\lambda \| \tilde{\vx} \|_{\boundary}\!+\!\kappa \lambda \| \tilde{\vx} \|_{\samplingset}.
	\end{align} 
	Combining \eqref{equ_proof_nLasso_is_sparse_reg1} with \eqref{equ_ineq_multcompcondition_condition} yields
	\begin{align}
	\label{equ_bound_compat_condition_proof}
	\| \tilde{\vx} \|_{\boundary}  \leq \frac{K+2\kappa}{L-3} \| \tilde{\vx} \|_{\samplingset} \stackrel{(b)}{\leq} 3 \kappa  \| \tilde{\vx} \|_{\samplingset},
	\end{align}
	where step $(b)$ is due to $L > 3$. Combining \eqref{equ_bound_compat_condition_proof} with \eqref{equ_basic_def_lasso_proof1112}, 
	\begin{equation}
	\| \tilde{\vx} \|^{2}_{\samplingset} \!\leq\! 4\lambda  \kappa \| \tilde{\vx} \|_{\samplingset}, 
	\end{equation} 
	and, in turn, 
	\begin{equation}
	\label{equ_bound_lambda_kappa}
	\| \tilde{\vx} \|_{\samplingset} \!\leq\! 4 \lambda  \kappa.
	\end{equation} 
	Inserting \eqref{equ_bound_lambda_kappa} into \eqref{equ_bound_compat_condition_proof} and \eqref{equ_proof_nLasso_is_sparse_reg1}, yields $\| \tilde{\vx} \|_{\rm TV} \leq 56 \lambda  \kappa^2$.
	
	The proof is completed by bounding the probability of \eqref{equ_small_noise_condition} 
	to hold. By Corollary \ref{cor_bound_noise_term_spectralgap}, \eqref{equ_small_noise_condition} holds if 
	\begin{align}
	\label{equ_max_cluster_l_sum_noise}
	\max_{\cluster_{l} \in \partition} (1/|\cluster_{l}|) \sum_{i \in \cluster_l} {\bf \varepsilon}_{i} \leq \lambda \kappa,
	\end{align} 
	and simultaneously 
	\begin{align}
	\label{equ_condition_max_infty_norm_transformed_noise}
	\max_{\cluster_{l} \in \partition} \big\| \big( \mD_{\cluster_{l}}^{\dagger} \big)^{T} {\bm \varepsilon}_{\cluster_{l}} \big\|_{\infty} \leq M \lambda/2. 
	\end{align} 
	
	We first bound the probability that \eqref{equ_max_cluster_l_sum_noise} fails to hold. For a particular 
	cluster $\cluster_{l}$, \eqref{equ_concentration_lin_reg_sum_opt} yields 
	\begin{equation}
	\label{equ_bound_sum_noise_cluster}
	\hspace*{-1.5mm}\prob \{ (1/|\cluster_{l}|)\!\sum_{i \in \cluster_l} \! \varepsilon_{i} \!\geq\! \lambda \kappa\} \!\leq\! 2 \exp\bigg(\hspace*{-2mm}-\! \frac{|\cluster_{l}|\lambda^2\kappa^2}{2\sigma^{2}} \bigg).
	\end{equation}
	Applying a union bound to \eqref{equ_bound_sum_noise_cluster} yields
	\begin{equation}
	\label{equ_final_prob_bound_1}
	\hspace*{-1.5mm}\prob \{\mbox{``\eqref{equ_max_cluster_l_sum_noise} invalid''} \}\!\leq\!2 |\partition| \exp\bigg(\hspace*{-2mm}-\! \frac{|\cluster_{l}|\lambda^2\kappa^2}{2\sigma^{2}} \bigg).
	\end{equation}
	
	For controlling the probability of \eqref{equ_condition_max_infty_norm_transformed_noise} failing to hold, we note that the entries of 
	$\big( \mD_{\cluster_{l}}^{\dagger} \big)^{T} {\bm \varepsilon}_{\cluster_{l}}$ are zero-mean Gaussian 
	with variance upper bounded by $2 \sigma^{2} \| \mathbf{W} \|_{\infty} / \rho^{2}(\cluster_{l})$ (see \eqref{equ_bound_cols_pseudo_D}). 
	Therefore, \eqref{equ_concentration_lin_reg_sum_opt} and a union bound yields
	\begin{align}
	\label{equ_final_prob_bound_2}
	\prob \{\mbox{``\eqref{equ_condition_max_infty_norm_transformed_noise} invalid''}\}\!\leq\! 2M\exp \bigg(\!-\! \frac{M  \rho^2_{\partition} \lambda} {16 \sigma^2  \| \mathbf{D} \|^{2}_{\infty}}\bigg).
	\end{align}
	A union bound yields the upper bound \eqref{lower_boung_prob_main_results} by summing the bounds \eqref{equ_final_prob_bound_1} and \eqref{equ_final_prob_bound_2} .

\begin{lemma}
	\label{lem_bound_noise_term}
	Consider an empirical graph $\graph=(\nodes,\edges,\mW)$. For any two graph signals $\vu,\vv \in \graphsigs$,  
	\begin{align} 
	\label{equ_lem_bound_noise_term_lemma}
	\sum_{i \in \nodes} u_{i} v_{i} & \leq \nonumber \\
	& \hspace*{-13mm} (1/|\nodes|) \sum_{i \in \nodes} v_{i}  \sum_{j \in \nodes} u_{j} + \big\| \big( \mD^{\dagger} \big)^{T} \vv\big\|_{\infty} \| \vu \|_{\rm TV}. 
	\end{align}
	Here, $\mD \in \mathbb{R}^{\edges \times \nodes}$ denotes the incidence matrix of the graph $\graph$ under an arbitrary orientation of its edges $\edges$. 
\end{lemma}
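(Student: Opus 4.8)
The plan is to split the inner product $\sum_{i \in \nodes} u_{i} v_{i} = \vu^{T} \vv$ along the orthogonal decomposition of $\graphsigs$ into constant signals and zero-mean signals, and then apply H\"older's inequality to the zero-mean part.

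First I would recall that, for the connected empirical graph $\graph$, the nullspace ${\rm ker}\{\mD\}$ is one-dimensional and spanned by the constant signal $\mathbf{1}$, and that $\mD^{\dagger} \mD$ is the orthogonal projection onto $({\rm ker}\{\mD\})^{\perp} = {\rm range}(\mD^{T})$. Consequently $\mathbf{I} - \mD^{\dagger}\mD$ is the orthogonal projection onto ${\rm span}\{\mathbf{1}\}$, so that $\vv = \bar{v}\,\mathbf{1} + \mD^{\dagger}\mD\,\vv$ with $\bar{v} \defeq (1/|\nodes|)\sum_{i \in \nodes} v_{i}$. Substituting this into $\vu^{T}\vv$ yields the exact identity
\[
\sum_{i \in \nodes} u_{i} v_{i} \;=\; \frac{1}{|\nodes|}\Big(\sum_{i \in \nodes} v_{i}\Big)\Big(\sum_{j \in \nodes} u_{j}\Big) \;+\; \vu^{T}\mD^{\dagger}\mD\,\vv ,
\]
so that it only remains to bound the last term by $\big\|(\mD^{\dagger})^{T}\vv\big\|_{\infty}\,\|\vu\|_{\rm TV}$.

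To do so I would use that the orthogonal projector $\mD^{\dagger}\mD$ is symmetric, hence $\mD^{\dagger}\mD = (\mD^{\dagger}\mD)^{T} = \mD^{T}(\mD^{\dagger})^{T}$, which gives $\vu^{T}\mD^{\dagger}\mD\,\vv = (\mD\vu)^{T}(\mD^{\dagger})^{T}\vv$. H\"older's inequality then bounds this by $\|\mD\vu\|_{1}\,\big\|(\mD^{\dagger})^{T}\vv\big\|_{\infty}$, and $\|\mD\vu\|_{1} = \|\vu\|_{\rm TV}$ by \eqref{equ_ident_TV_1_norm_D}. Combining with the displayed identity gives \eqref{equ_lem_bound_noise_term_lemma}.

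There is no genuine obstacle; the only part requiring care is the linear-algebra bookkeeping around the pseudo-inverse — namely that $\mD^{\dagger}\mD$ is the (symmetric) orthogonal projector onto ${\rm range}(\mD^{T})$, that $(\mD^{\dagger})^{T} = (\mD^{T})^{\dagger}$, and that ${\rm range}(\mD^{T})$ is exactly the orthogonal complement of the constant signals because $\graph$ is connected. Everything else is the familiar ``remove the mean, then H\"older'' manoeuvre used throughout the compatibility-condition literature.
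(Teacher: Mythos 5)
Your proposal is correct and follows essentially the same route as the paper: decompose the signal into its mean component plus $\mD^{\dagger}\mD$ applied to it (using that $\mD^{\dagger}\mD$ is the orthogonal projector onto the complement of the constant signals for a connected graph), then bound the second term by H\"older's inequality together with $\|\mD\vu\|_{1}=\|\vu\|_{\rm TV}$. The only cosmetic difference is that you decompose $\vv$ and invoke the symmetry of the projector, whereas the paper decomposes $\vu$ and regroups $\vv^{T}\mD^{\dagger}\mD\vu$ directly; the substance is identical.
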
 
\begin{proof} 
	Any graph signal $\vu$ can be decomposed as 
	\begin{equation} 
	\label{equ_decomp_proj_orth_proj}
	\vu= \mathbf{P} \vu+ (\mathbf{I}- \mathbf{P}) \vu, 
	\end{equation} 
	with $\mathbf{P}$ denoting the orthogonal projection matrix on the nullspace of the graph Laplacian matrix 
	$\mL$ (see \eqref{equ_def_graph_Laplacian_matrix}). 
	
	For a connected graph, the nullspace $\mathcal{K}(\mL)$ is the one-dimensional subspace of 
	constant graph signals (see \cite{Luxburg2007}). Therefore, in this case, the projection $\mathbf{P}$ is given by 
	\begin{equation}
	\label{equ_def_projec_connected_gaph}
	\mathbf{P} = (1/(\mathbf{1}^{T} \mathbf{1})) \mathbf{1} \mathbf{1}^{T} = (1/|\nodes|)  \mathbf{1} \mathbf{1}^{T}
	\end{equation} 
	with the constant graph signal $\mathbf{1}$ assigning all nodes the same signal value $1$. 
	Therefore, 
	\begin{equation} 
	\label{equ_application_projection_nullspace}
	\mathbf{P} \vx  \stackrel{\eqref{equ_def_projec_connected_gaph}}{=} (1/|\nodes|) \mathbf{1} (\mathbf{1}^{T}\mathbf{u})  = (1/|\nodes|) \sum_{i \in \nodes} u_{i} \mathbf{1}.
	\end{equation}
	
	The projection on the orthogonal complement of the nullspace $\mathcal{K}(\mL) \subseteq \mathbb{R}^{\nodes}$ is given by $\mathbf{I} - \mathbf{P}$. 
	We can represent this projection conveniently using the incidence matrix $\mD$ \eqref{equ_def_incidence_matrix_elementwise} (see \cite{pmlr-v49-huetter16})
	\begin{equation}
	\label{equ_applicatio_ortho_projection}
	\mathbf{I}-\mathbf{P} = \mD^{\dagger} \mD.
	\end{equation}
	Combining \eqref{equ_application_projection_nullspace} and \eqref{equ_applicatio_ortho_projection} with \eqref{equ_decomp_proj_orth_proj}, 
	\begin{align}
	\label{equ_proof_basic_decomp_error}
	\hspace*{-3mm}\sum_{i \in \nodes} u_{i} v_{i} \!=\! 
	(1/|\nodes|) \sum_{i \in \nodes} u_{i} \sum_{j \in \nodes} v_{j}\!+\!\vv^{T} \mD^{\dagger} \mD \vu.
	\end{align}
	Combining \eqref{equ_proof_basic_decomp_error} with the inequality $\va^{T} \vb \leq \| \va \|_{\infty} \| \vb \|_{1}$,  
	\begin{align}
	\label{equ_proof_basic_decomp_error_12}
	\sum_{i \in \nodes} u_{i} v_{i} & \leq \nonumber \\
	& \hspace*{-14mm}  (1/|\nodes|) \sum_{i \in \nodes} u_{i} \sum_{j \in \nodes} v_{j}\!+\!\big\| \big( \mD^{\dagger} \big)^{T} \vv \big\|_{\infty} \| \mD \vu \|_{1}.
	\end{align} 
	The result \eqref{equ_lem_bound_noise_term_lemma} follows from \eqref{equ_proof_basic_decomp_error_12} by using the identity \eqref{equ_ident_TV_1_norm_D}. 
\end{proof} 

Applying Lemma \ref{lem_bound_noise_term} to the subgraphs $\graph_{\cluster}$ induced by a 
partition $\partition = \{\cluster_{1},\ldots,\cluster_{|\partition|} \}$, we obtain 
the following result. 
\begin{corollary}
	\label{cor_bound_noise_term_spectralgap}
	Consider an empirical graph $\graph=(\nodes,\edges,\mW)$ whose nodes are partitioned into disjoint clusters $\partition=\{\cluster_{1},\ldots,\cluster_{|\partition|}\}$. 
	We overload notation and denote by $\cluster_{l}$ also the subgraph induced by the nodes in $\cluster_{l}$ and assume that these 
	subgraphs are connected. Then, for any two graph signals $\vu, \vv \in \graphsigs$,
	\begin{align} 
	\label{equ_lem_bound_noise_term}
	\sum_{i \in \samplingset} v_{i} u_{i} & \!\leq\!  \max_{l=1,\ldots,|\partition|}| (1/|\cluster_{l}|) \sum_{i \in \cluster_{l} } v_{i}| \hspace*{-1mm}\sum_{j \in \samplingset} |u_{j}|  \nonumber \\ 
	& \hspace*{-10mm}+ \max_{l=1,\ldots,|\partition|} \big\| \big( \mD_{\cluster_{l}}^{\dagger} \big)^{T} \vv_{\cluster_{j}} \big\|_{\infty} \| \vu \|_{\rm TV}. 
	\end{align}
	Here, $\mD_{\cluster_{l}} \in \mathbb{R}^{\edges \times \nodes}$ denotes the incidence matrix of the 
	subgraph $\cluster_{l}$ under an arbitrary orientation of its edges. 
\end{corollary}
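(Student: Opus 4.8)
The plan is to obtain Corollary~\ref{cor_bound_noise_term_spectralgap} exactly as its statement suggests: apply Lemma~\ref{lem_bound_noise_term} separately on each cluster subgraph and glue the resulting inequalities together. Since the clusters $\cluster_{1},\ldots,\cluster_{|\partition|}$ partition $\nodes$, the first step is to split
\begin{equation*}
\sum_{i \in \samplingset} v_{i} u_{i} \;=\; \sum_{l=1}^{|\partition|}\; \sum_{i \in \samplingset \cap \cluster_{l}} v_{i} u_{i},
\end{equation*}
so that it suffices to control each inner sum. For a fixed $l$, I would treat the subgraph induced by $\cluster_{l}$ (connected, by hypothesis) as a standalone empirical graph and run the estimate underlying Lemma~\ref{lem_bound_noise_term}: decompose the restriction $\vv_{\cluster_{l}}$ into its nullspace component $\mP_{\cluster_{l}}\vv_{\cluster_{l}} = (1/|\cluster_{l}|)\mathbf{1}\mathbf{1}^{T}\vv_{\cluster_{l}} = \bar{v}_{l}\mathbf{1}$, where $\bar{v}_{l}\defeq(1/|\cluster_{l}|)\sum_{i\in\cluster_{l}}v_{i}$, plus the orthogonal part $(\mathbf{I}-\mP_{\cluster_{l}})\vv_{\cluster_{l}} = \mD_{\cluster_{l}}^{\dagger}\mD_{\cluster_{l}}\vv_{\cluster_{l}}$ (cf.\ \eqref{equ_def_projec_connected_gaph}, \eqref{equ_applicatio_ortho_projection}), and then bound the orthogonal term by $\va^{T}\vb\leq\|\va\|_{\infty}\|\vb\|_{1}$.

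This produces, for each $l$, an inequality of the shape
\begin{equation*}
\sum_{i \in \samplingset \cap \cluster_{l}} v_{i} u_{i} \;\leq\; |\bar{v}_{l}| \sum_{j \in \samplingset \cap \cluster_{l}} |u_{j}| \;+\; \big\| \big(\mD_{\cluster_{l}}^{\dagger}\big)^{T} \vv_{\cluster_{l}} \big\|_{\infty}\, \| \mD_{\cluster_{l}} \vu \|_{1},
\end{equation*}
the first term coming from the nullspace component and the second from the orthogonal one, with $\|\mD_{\cluster_{l}}\vu\|_{1}$ the total variation of $\vu$ over the subgraph $\cluster_{l}$. Summing over $l$ and pulling the cluster-dependent prefactors out as maxima, the first group of terms is at most $\max_{l}|\bar{v}_{l}|\sum_{l}\sum_{j\in\samplingset\cap\cluster_{l}}|u_{j}| = \max_{l}|\bar{v}_{l}|\sum_{j\in\samplingset}|u_{j}|$ (again since the $\cluster_{l}$ partition $\nodes$), and the second group is at most $\max_{l}\|(\mD_{\cluster_{l}}^{\dagger})^{T}\vv_{\cluster_{l}}\|_{\infty}\sum_{l}\|\mD_{\cluster_{l}}\vu\|_{1}$. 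The proof then closes by noting that $\sum_{l}\|\mD_{\cluster_{l}}\vu\|_{1} = \|\vu\|_{\compbound}\leq\|\vu\|_{\rm TV}$, since every intra-cluster edge belongs to exactly one subgraph $\cluster_{l}$ and $\|\vu\|_{\rm TV} = \|\vu\|_{\boundary}+\|\vu\|_{\compbound}$; this gives \eqref{equ_lem_bound_noise_term}.

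The bulk of this argument — the partition identities, the unions of maxima, and the splitting $\|\cdot\|_{\rm TV}=\|\cdot\|_{\boundary}+\|\cdot\|_{\compbound}$ — is routine. The one step I would handle with care is the clusterwise inequality above: Lemma~\ref{lem_bound_noise_term} applied verbatim to $\cluster_{l}$ produces a sum over \emph{all} of $\cluster_{l}$ on the left, whereas we need a sum over $\samplingset\cap\cluster_{l}$ only. The natural device is to re-run the short projection argument with the test signal $\vu^{(l)}$ that agrees with $\vu$ on $\samplingset\cap\cluster_{l}$ and vanishes on $\cluster_{l}\setminus\samplingset$, so that the left side becomes $\sum_{i\in\samplingset\cap\cluster_{l}}v_{i}u_{i}$ and the nullspace part of $\vv_{\cluster_{l}}$ contributes exactly $\bar{v}_{l}\sum_{j\in\samplingset\cap\cluster_{l}}u_{j}$; the cost is that the second term then carries $\|\mD_{\cluster_{l}}\vu^{(l)}\|_{1}$ rather than $\|\mD_{\cluster_{l}}\vu\|_{1}$, and one must argue that the masking does not inflate the total variation beyond $\|\vu\|_{\rm TV}$. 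Making that last passage airtight — keeping the masked signal confined to the inner product while routing the total-variation estimate through $\vu$ itself, or invoking the relevant structural property of how $\samplingset$ meets the clusters — is the delicate point, and is where I would concentrate the effort.
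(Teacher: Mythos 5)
Your strategy is the same as the paper's: the paper's entire ``proof'' of this corollary is the single sentence preceding it (apply Lemma~\ref{lem_bound_noise_term} to each induced subgraph and combine), and that is exactly your plan. You go further than the paper, though, and correctly isolate the step this hand-wave does not deliver: Lemma~\ref{lem_bound_noise_term} applied to the induced subgraph $\cluster_{l}$ controls $\sum_{i \in \cluster_{l}} u_{i}v_{i}$ with $\sum_{j \in \cluster_{l}} u_{j}$ appearing on the right, whereas the corollary claims control of $\sum_{i \in \samplingset \cap \cluster_{l}} u_{i}v_{i}$ with only $\sum_{j \in \samplingset \cap \cluster_{l}}|u_{j}|$ on the right. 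Your proposed device --- masking $\vu$ to vanish off $\samplingset$ --- is precisely where the argument breaks, and your suspicion is warranted: the masked signal $\vu^{(l)}$ generically satisfies $\| \mD_{\cluster_{l}} \vu^{(l)} \|_{1} \gg \| \vu \|_{\rm TV}$ (take $\vu \equiv \mathbf{1}$, so $\| \vu \|_{\rm TV} = 0$ while the mask creates a jump across every edge leaving $\samplingset$), and no rerouting of the total-variation estimate through $\vu$ itself can repair this, because the nullspace/range decomposition in the lemma is taken with respect to the signal actually appearing in the inner product.

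The gap is in fact not closable: the corollary as stated is false. Take $\nodes=\{1,2\}$ with a single edge of weight $W_{1,2}=1$, the single cluster $\cluster_{1}=\nodes$, $\samplingset=\{1\}$, $\vu=(1,1)^{T}$ and $\vv=(1,-1)^{T}$. The left-hand side of \eqref{equ_lem_bound_noise_term} equals $1$, while both right-hand terms vanish, since the cluster average of $\vv$ is zero and $\| \vu \|_{\rm TV}=0$. What the cluster-wise application of Lemma~\ref{lem_bound_noise_term} actually yields --- summing over $l$, pulling out maxima, and using $\sum_{l}\| \mD_{\cluster_{l}} \vu \|_{1} = \| \vu \|_{\compbound} \leq \| \vu \|_{\rm TV}$ exactly as you do --- is the inequality with $\nodes$ in place of $\samplingset$ in both the left-hand sum and the factor $\sum_{j}|u_{j}|$. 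In the paper's application $\vv={\bm \varepsilon}$ is supported on $\samplingset$, which restores the left-hand side but not the factor $\sum_{j\in\samplingset}|u_{j}|$. So the honest conclusion of your (and the paper's) argument is the weaker statement with $\sum_{j\in\nodes}|u_{j}|$; proving the stated version would require an additional idea that neither you nor the paper supplies. You should state and prove the corrected version rather than spend effort trying to make the masking step airtight.
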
 

\section{CONCLUSION}

Using a simple non-parametric regression model for network-structured datasets, 
we have derived an upper bound on the probability of the nLasso error to exceed a 
given threshold. This bound applies if the training set satisfies the NCC with respect 
to a partitioning of the empirical graph into clusters of data points with similar labels. 
The NCC is related to the existence of a sufficiently large flow between nodes of the training set 
and the boundaries between clusters in the dataset. Our analysis reveals how the 
accuracy of nLasso depends on the empirical graph structure. We have identified two key 
quantities which determine the required size of the training set. These quantities are the condition 
number associated with the NCC and the spectral gap of the cluster structure. A promising 
avenue for future work is the extension of our analysis of nLasso to more general probabilistic 
models for the data (labels). In particular we plan to study exponential families for the label distribution, 
which covers classification as well as multi-label problems. 

\bibliography{SLPBib}

\end{document}